\newcommand{\me}{\mathrm{e}}
  \providecommand\BibTeX{{%
    \normalfont B\kern-0.5em{\scshape i\kern-0.25em b}\kern-0.8em\TeX}}}
\begin{document}

\title{Towards Optimizing the Costs of LLM Usage}

\author{Shivanshu Shekhar}
\affiliation{%
  \institution{IIT Madras}
  \country{India}
}

\author{Tanishq Dubey}
\affiliation{%
  \institution{IIT Delhi}
  \country{India}}

\author{Koyel Mukherjee, Apoorv Saxena, Atharv Tyagi}
\affiliation{%
  \institution{Adobe Research}
  \city{Bangalore}
  \country{India}
  \email{\{komukher, apoorvs, athtyagi\}@adobe.com}
}

\author{Nishanth Kotla}
\affiliation{%
 \institution{IIT Guwahati}
 \country{India}}

\renewcommand{\shortauthors}{Shekhar et al.}

\renewcommand{\shortauthors}{Shekhar et al.}

\begin{abstract}
  Generative AI and LLMs in particular are heavily used nowadays for various document processing tasks such as question answering and summarization. 
  However, different LLMs come with different capabilities for different tasks as well as with different costs, tokenization, and latency. In fact, enterprises are already incurring huge costs of operating or using LLMs for their respective use cases. 
  
  In this work, we propose optimizing the usage costs of LLMs by estimating their output quality (without actually invoking the LLMs), and then solving an optimization routine for the LLM selection to either keep costs under a budget, or minimize the costs, in a quality and latency aware manner. We propose a model to predict the output quality of LLMs on document processing tasks like summarization, followed by an LP rounding algorithm to optimize the selection of LLMs. We study optimization problems trading off the quality and costs, both theoretically and empirically. We further propose a sentence simplification model for reducing the number of tokens in a controlled manner. Additionally, we propose several deterministic heuristics for reducing tokens in a quality aware manner, and study the related optimization problem of applying the heuristics optimizing the quality and cost trade-off. We perform extensive empirical validation of our methods on not only enterprise datasets but also on open-source datasets, annotated by us, and show that we perform much better compared to closest baselines. Our methods reduce costs by $40\%- 90\%$ while improving quality by $4\%-7\%$. We release the annotated open source datasets\footnote{\url{https://anonymous.4open.science/r/llm-cogs-57DD/}} to the community for further research and exploration. 

\end{abstract}

\begin{CCSXML}
<ccs2012>
   <concept>
       <concept_id>10003752.10003753</concept_id>
       <concept_desc>Theory of computation~Models of computation</concept_desc>
       <concept_significance>500</concept_significance>
       </concept>
   <concept>
       <concept_id>10003752.10010070</concept_id>
       <concept_desc>Theory of computation~Theory and algorithms for application domains</concept_desc>
       <concept_significance>500</concept_significance>
       </concept>
   <concept>
       <concept_id>10010405.10010497</concept_id>
       <concept_desc>Applied computing~Document management and text processing</concept_desc>
       <concept_significance>500</concept_significance>
       </concept>
   <concept>
       <concept_id>10010147.10010178</concept_id>
       <concept_desc>Computing methodologies~Artificial intelligence</concept_desc>
       <concept_significance>500</concept_significance>
       </concept>
   <concept>
       <concept_id>10003752.10003809</concept_id>
       <concept_desc>Theory of computation~Design and analysis of algorithms</concept_desc>
       <concept_significance>500</concept_significance>
       </concept>
   <concept>
       <concept_id>10002951.10003260</concept_id>
       <concept_desc>Information systems~World Wide Web</concept_desc>
       <concept_significance>500</concept_significance>
       </concept>
 </ccs2012>
\end{CCSXML}

\ccsdesc[500]{Theory of computation~Models of computation}
\ccsdesc[500]{Theory of computation~Theory and algorithms for application domains}
\ccsdesc[500]{Applied computing~Document management and text processing}
\ccsdesc[500]{Computing methodologies~Artificial intelligence}
\ccsdesc[500]{Theory of computation~Design and analysis of algorithms}
\ccsdesc[500]{Information systems~World Wide Web}
\keywords{LLM Selection, LLM Cost Optimization, LLM Quality Estimation, Optimizing Token Length}


\maketitle

\section{Introduction}
Generative AI based technologies are transforming the way we approach most tasks nowadays and have the potential to significantly disrupt the global economy. In fact, according to McKinsey \& Company, Generative AI could add ~$2.6-4.4$ trillion USD to the global economy annually across different sectors, such as banking, retail, logistics, technology and R\&D, life sciences among others \cite{mckinsey, logistics, openbots}.  OpenAI's ChatGPT \cite{OpenAI} and other GPT based large language models available through OpenAI web APIs, along with other open source LLMs such as LLAMA2 \cite{ llama2} etc. have proved tremendously successful in document processing tasks, such as question answering and summarization. In fact, according to recent reports,  Generative AI is ``revolutionizing Intelligent Document Processing (IDP) for businesses'' \cite{edgeverve} and is  ``poised to unleash the next wave of productivity'' \cite{mckinsey}. However, this great potential comes at a cost and it is important to understand the underlying economic ramifications \cite{truecost, gartnercost}. 

In practical scenarios, different Large Language Models (LLMs) come with diverse costs and capabilities. Table \ref{tab:costs} lists the costs associated with different Open AI provided LLM APIs. We can see that the costs are quite varied across LLMs. Not only the costs, the capabilities of different LLMs for different tasks and different types of documents can be potentially varied, and are non-trivial to estimate. In fact, there seems to be \textbf{no clear hierarchy of models} in terms of their costs and capabilities. For instance, we have empirically observed that there is a significant difference in the summarization capabilities of GPT-3.5-Turbo and Text-Davinci on documents containing data in certain formats, such as tables versus lists. Predicting or estimating the output quality of LLMs for any given context and task, without actually invoking the LLMs is non-trivial and challenging. Current methods \cite{chen2023frugalgpt} need the LLM outputs at run time to judge how the model performed. However, this can lead to increased costs and latency. The choice of metric for estimating quality of generated texts for different tasks quantitatively is also a difficult problem, as existing metrics \cite{lin-2004-rouge, banerjee-lavie-2005-meteor, papineni-etal-2002-bleu} often do not correlate well with human perceptions of quality. 


Estimating the output quality alone does not solve the problem. It is still non-trivial to determine which model a task should be directed to when \textbf{cost and latency considerations} come into the mix. There might be system imposed budget constraints, or the user might be interested in minimizing their costs, though not at the expense of the output quality. For example, randomly routing a percentage of queries to cheaper/weaker LLMs for lowering costs might end up hampering user experience. One needs to ideally find an optimal routing of queries or tasks to models to satisfy required constraints on costs, quality or latency.

\textbf{Our Contributions:} 
\begin{enumerate}
    \item We propose QC-Opt: a \underline{Q}uality aware \underline{C}ost \underline{Opt}imized LLM routing engine and framework that optimizes both the choice of LLM as well the input token count at run time for reducing costs while maintaining the output quality. 

    \item We theoretically study the cost, latency and quality constrained optimization problems, including their hardness as well as propose polynomial time provably optimal algorithms for practically important special cases.
    
    \item We propose a model and loss function for estimating the output quality of LLMs for document summarization, without invoking LLMs at run time. 
  
    \item We build upon existing sentence simplification models to generate token optimized, quality controlled simplified sentences. We validate the results both qualitatively and quantitatively. 

    \item We further propose several generalized token reduction heuristics and optimized ways of applying them in a loss controlled manner. 
   
    \item We conduct extensive empirical evaluation of our framework, methods and models on public as well as  enterprise datasets and compare them to closest baselines in the literature and industry. Our framework not only reduces costs by $40-90\%$, but also improves observed quality by $4-7\%$. 

    \item We further report results from a user study to validate our model selection with qualitative human perception.
    
    \item We release the annotated training datasets generated from open source data for further exploration by the community. 
\end{enumerate}

\begin{table}[]
\centering
\resizebox{0.5\textwidth}{!}{
\begin{tabular}{|l|l|l|}
\hline
\textbf{Model} & \textbf{Input Cost} & \textbf{Output Cost} \\ \hline
{text-davinci-002} & { \$0.0200 / 1K tokens}  & \$0.0200 / 1K tokens            \\ \hline
text-davinci-003  & \$0.0200 / 1K tokens & \$0.0200 / 1K tokens                      \\ \hline
{ text-curie-001}   & { \$0.0020 / 1K tokens}  & \$0.0020 / 1K tokens  
\\ \hline
GPT-3.5-Turbo (4K context)  & \$0.0015 / 1K tokens & \$0.002 / 1K tokens 
\\ \hline
GPT-3.5-Turbo (16K context) & \$0.003 / 1K tokens  & \$0.004 / 1K tokens 
\\ \hline
GPT-4 (8K context) & \$0.03 / 1K tokens  & \$0.06 / 1K tokens 
\\ \hline
GPT-4 (16K context) & \$0.06 / 1K tokens   & \$0.12 / 1K tokens  
\\ \hline
\end{tabular}}
\caption{Costs of different LLM APIs offered by OpenAI \cite{Open-ai-pricing}}
\label{tab:costs}
\end{table}


    

\section{Related Work}
The use of LLMs at a large scale across various domains is a relatively recent phenomenon and has not been studied extensively yet. Here we outline some the works from literature and industry that study these problems. 

\noindent\textbf{Model Selection and Cascade: }FrugalGPT \cite{chen2023frugalgpt} is an LLM Cascade based solution which decides the performance of an LLM after getting the API response. They employ both a predictor and an allocator model along with a scoring function to evaluate the responses of different LLMs. However, this approach introduces excessive latency overhead due to its inherently sequential nature. \cite{2006.07512, 2102.09127} uses a cascade of API's only for classification related tasks. Works like \cite{2204.06271, 2102.09127, khalili2022babybear} aim to reduce only latency leading to a compromise of accuracy. All of these methods sequentially queries for the next model in the cascade if the previous model's performance was not satisfactory. These methods inherently cannot be parallelized leading to inefficiencies.

\noindent\textbf{Prompt Length Reduction:}
\textbf{GPTrim \cite{gptrim}}: GPTrim implements heuristics to pre-process text for token reduction. In particular, this approach leverages the removal of spaces, punctuation, stop word removal and stemming to cut down on token count. The prime limitation of this approach is that it applies these heuristics unconditionally in a tokeniser-agnostic way and thereby often suffers from suboptimal reduction or even an increase in token count. Further, the set of heuristics implemented is not exhaustive and leaves much room for improvement. Towards this, our approach adds a variety of tokeniser-aware heuristics to enhance compression and preserve the quality of LLM responses. 

\textbf{Sentence Compression with RL (SCRL) \cite{scrl}:}  SCRL explores a reinforcement learning approach to train a sentence compression model that extracts a sequence of tokens from a given sentence. Firstly, since this method relies entirely on dropping words, it is restricted in terms of word reordering, semantic and lexical changes that it can leverage to shorten sentences. This can lose out on important information or result in incoherent sentences. Thirdly, this approach is agnostic to the LLM tokeniser and hence will leave room for inefficiencies in compression, given that the token count directly depends on the token dictionary corresponding to the tokeniser. Finally, this compression approach lacks flexibility in terms of controlling compression at inference time, i.e., to change the target length of the sentence, we will have to retrain the model. In contrast, we propose a paraphrasing-based, tokeniser-aware token reduction module with controllable length and information loss at inference time.

\textbf{Caching Based approaches:} GPTCache \cite{gptcache} tries to reduce cost by caching LLM API responses for future. It is not concerned with the selection of LLM APIs based on input context. Caching of LLM responses is more helpful in the cases where user queries for different users come from the same distribution, and such queries are available, which is not applicable in our case. 

\section{Problem Description and Proposed Framework}
In this section, we describe the problem and setting. 
The problem is as follows. When a new context (query) arrives, we need to route it to a model $M_i$ of choice. We want the payoff or quality of response to be high while the cost and latency should be low, and/or within a budget and threshold, respectively. 

We propose a quality aware cost optimization framework QC-Opt for intelligent and optimized usage of LLMs for document processing tasks. The underlying key ideas are: a) estimating model performances (without actually making the model queries), b) optimizing the selection of LLM subject to estimated performance, cost, latency and system constraints, c) reducing token count of the inputs in a quality aware manner to further reduce costs while not compromising the quality. Figure \ref{fig:framework} shows the different components of QC-Opt for the document summarization use case. The first two components: BertScore Predictor and Budget Aware Optimization Algorithm help in model selection and routing for a given input context, that is, for each section in the document. We will refer to these components together as Smart Router henceforth in the paper. The third major component is the Token Optimization module that helps reduce token length of the input contexts in a quality aware manner. We next describe the main components of the framework, and the associated technical problems in details. 

\begin{figure}
    \centering
    \includegraphics[width = 0.5\textwidth]{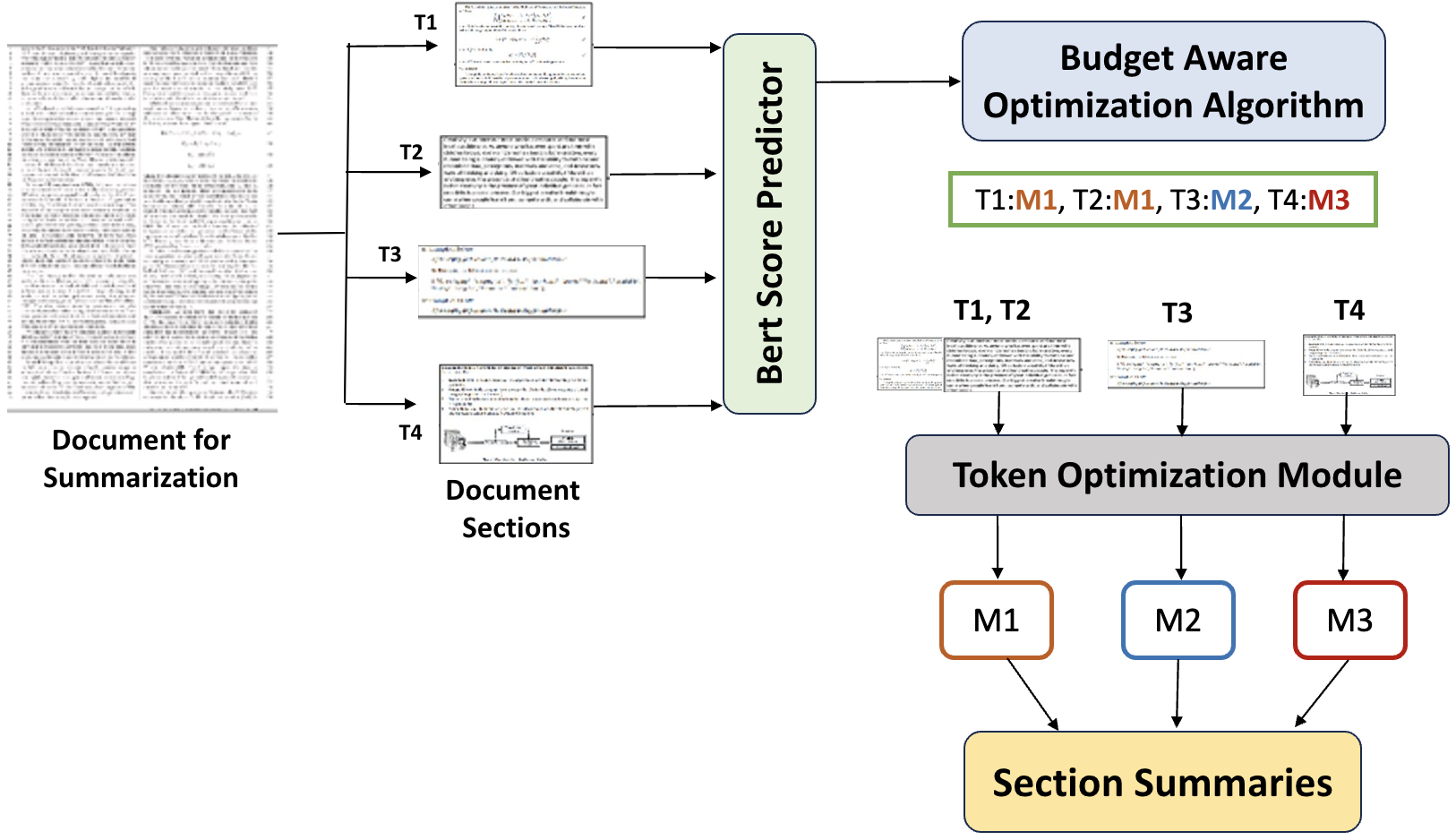}
    \caption{QC-Opt: first, we have a BertScore predictor predicting the output quality of each LLM on each section; second, we have a Budget Aware optimization algorithm, that optimizes the LLM selection to maximize expected (predicted) performance subject to budget and latency constraints; third we have a token optimization module for reducing token length in a quality aware manner.}
    \label{fig:framework}
\end{figure}

A key component of the model selection and routing is the estimation of quality of performance of the LLM selected for a given task and context. In general, the LLM performance can vary with the task and the context. For example, varying with domain of the text, format of the text among others (example provided in the Appendix). A key insight that we have empirically observed is that there might not be a clear hierarchy of the models in terms of their performance \cite{chen2023frugalgpt, 2006.07512}. 
In other words, the largest or most expensive or most popular model might or might not perform the best for a given task and context. Hence, it is non-trivial to estimate the LLM performance quality for the model selection problem. Existing works such as FrugalGPT \cite{chen2023frugalgpt} \emph{need to invoke the LLMs at run time} in order to evaluate their performance. However, that is counter productive to our use case and objectives, as querying each LLM separately will not only \emph{increase the costs a lot}, but will also \emph{result in high latency}. We have proposed a model to predict the output quality of LLMs with high fidelity (Section \ref{sec:predictor}). 


Once we have estimated the model performances, it is still non-trivial to determine which model to route a query to, given budgetary restrictions and latency requirements. Each model comes with its own cost parameters per input and output token length and other related costs and latency. Moreover, the same input can have different token lengths based on the model chosen. This is explained in details in Section \ref{sec:optimizer}. We show that the budget aware quality optimization problem is NP-hard, and give an LP-rounding algorithm that we show performs very well compared to standard baselines. We also study the hardness of the cost minimization problem and give polynomial algorithms for special cases. 

Furthermore, we study the problem of optimizing the token count of the input in a quality aware manner. We explore a two-fold approach. Firstly, we propose simplifying the input text while preserving semantic contexts in a token aware manner. This is described in Section \ref{muss}.  Secondly, we have proposed a set of generic heuristics for reliably reducing token count, along with an optimization approach to control the loss of quality with token reduction (Section \ref{heur}). 

We have empirically validated these approaches extensively, both individually as well as in a sequence and compared with closely related baselines. 
Finally, we have evaluated the entire pipeline of the framework QC-Opt. We have shown its effectiveness at reducing costs significantly, while maintaining reasonable quality standards. 

\section{Smart Router}
Here we discuss the different components of the module Smart Router. The goal of this component is two fold: (i) estimate the model output quality for a given context for a given task, and (ii) route to selected LLMs, chosen optimally to maximize the accuracy or quality, subject to budget and latency constraints.

Let us first discuss the setting and notation of the model selection problem under constraints. 
We have access to a set of $K$ models (LLMs), either through local deployment or through APIs: $\mathcal{M} = \{M_1, M_2, \ldots, M_K\}$. Whenever a model $M_i$ is queried, it incurs the following costs (defined similar to \cite{chen2023frugalgpt}):
\begin{enumerate}
\item cost per token of the input prompt $C^I_i \geq 0$;
\item cost per token of the output response $C^O_i \geq 0$;
\item a fixed cost of invoking a particular model\footnote{This can be the fixed cost (compute, network I/O, service charge etc.) of calling a particular API or invoking a locally hosted model, which can incur compute charges and/or cluster activation charges.} $C^F_i \geq 0$.
\end{enumerate}
Then the total cost incurred by an input $P^I$, with corresponding output $P^O$  from model $M_i$ is: $C^I_i \cdot |P^I| + C^O_i \cdot |P^O| + C^F_i$. It is possible that $C^I_i = C^O_i$, and $C^F_i = 0$ for any or all $i\in [K]$. 

In addition to the monetary cost, each invocation of a model incurs certain latency. The latency 
incurred is proportional to the token length of input and output and also depends on the particular choice of API, or the local instantiation of the model. Generally, it has also been observed to be proportional to the model size. 
Let the latency per unit token length as incurred by model $M_i$ be $L_i$. 
Therefore, when calling model $M_i$, the total latency experienced by an input of token length $|P^I_i|$ (corresponding to the tokenizer for $M_i$) and corresponding output of length $|P^O_i|$ would be $L_i \cdot (|P^I_i| + P^O_i|) + \mathcal{N}$, where $\mathcal{N}$ denotes noise (mean $0$) in the estimation due to network and system state related stochasticity. 

\subsection{Optimizing the Choice of LLMs}
\label{sec:optimizer}
We will first discuss the optimization problems around choosing the LLMs for a given input context and task. For the discussion in this section, we \emph{assume that we have access to quality predictions for different LLMs for the given inputs, in terms of a quantitative score}.

Let us consider the document summarization task. A given document $\mathcal{D}$ can be considered to be set of $n$ sections: $\mathcal{D} = \{d_1, d_2, \ldots, d_n\}$. Each section needs to be summarized to a $p$ line summary, where $p$ is a system defined (or, user specified) constant. For a given summary for $d_j$ from LLM $M_i$, (assume) we have an quantitative estimate of the output quality as a score $S_{i,j}$.

As stated earlier, each invocation of a model comes with a specific cost. 
If the model $M_i$ is chosen for section $d_j$ ($j\in [n])$ from the document $D$, then the cost incurred is given by: $C_{i,j} = C^I_i \cdot |d^I_{i,j}| + C^O_i \cdot |d^O_{i,j}| + C^F_i$. Here, $|d^I_{i,j}|$ denotes the token length of the (input) section $d_j$ corresponding to the tokenizer of $M_i$, and $|d^O_{i,j}|$ denotes the corresponding token length of the output summary for section $d_j$ by model $M_i$. 

\subsubsection{Budget Aware Optimizer}
Let the system imposed monetary budget for summarization task on the given context or (document D) be $B$.  
 The goal is that the total cost incurred should be less than the budget imposed $B$. Let us define an indicator variable $x_{i,j}$ which is $1$ when model $M_i$ is chosen to summarize section $d_j$, and $0$ otherwise. Therefore, the budget constraint is: \\
 $\quad \quad \quad \quad\sum_{M_i \in \mathcal{M}}{\sum_{d_j \in \mathcal{D}}{C_{i,j} \cdot x_{i,j}}} \leq B$.

Let the required SLA (service level agreement) on the expected latency be $L$.
The expected latency for section $d_j$ if routed to model $M_i$: $\ell_{i,j} = L_i \cdot (|d^I_{i,j}| + |d^O_{i,j}|)$. Let us assume that the $K$ models can be called in parallel to each other. However, multiple calls to the same model have to be sequential in nature. The constraint would then translate to:\\
 $\quad \quad \quad \quad \left(\max_{M_i\in \mathcal{M}}{\sum_{d_j\in \mathcal{D}}{\ell_{i,j}\cdot x_{i,j}}}\right) \leq L$. \\
The goal is to maximize the total expected quality of summaries generated for all the sections through the respective models chosen for routing. Therefore, the objective is: \\
$\text{Maximize}\quad \sum_{d_j \in \mathcal{D}}{\sum_{M_i \in \mathcal{M}}{S_{i,j}\cdot x_{i,j}}}$.\\

We further need to add a constraint $\sum_{M_i\in \mathcal{M}}{x_{i,j}} = 1$ for all $d_j \in \mathcal{D}$ to ensure that every section is summarized by one model. 
In order to estimate the the cost, we would also need to estimate the output token length $|d^O_{i,j}|$ for each model and text pair. Recall that for summarization, we require $p$ length summaries, which is specified as a part of the prompts. 
One can estimate the expected total number of sentences, and the average number of words per sentence from each model $M_i$ for such a use case by empirical observation. We estimate the average output token length per model and per text in this way. Let this be $|d^{avg}_{i,j}|$. 
The cost $C_{i,j}$ is therefore estimated as: $C^I_i \cdot |d^I_{i,j}| + C^O_i \cdot |d^{avg}_{i,j}| + C^F_i$.
The integer linear program for this problem, that we denote \textsc{Budget-Opt} is given next in Equation \ref{eq:budget-opt}.

\begin{align}
\label{eq:budget-opt}
\text{Maximize}\quad &\sum_{d_j \in \mathcal{D}}{\sum_{M_i \in \mathcal{M}}{S_{i,j}\cdot x_{i,j}}}\\\nonumber
\text{subject to} &\sum_{M_i \in \mathcal{M}}{\sum_{d_j \in \mathcal{D}}{C_{i,j} \cdot x_{i,j}}} \leq B,\\\nonumber
&\sum_{d_j\in \mathcal{D}}{\ell_{i,j}\cdot x_{i,j}} \leq L\quad \forall \quad M_i \in \mathcal{M}\\\nonumber
& \sum_{M_i\in \mathcal{M}}{x_{i,j}} = 1\quad \forall \quad d_j \in \mathcal{D}, \\\nonumber
&x_{i,j} \in \{0, 1\} \quad \forall \quad d_j \in \mathcal{D}, \quad \forall \quad M_i \in \mathcal{M}\\\nonumber
\end{align}

We next show theoretically that \textsc{Budget-Opt} is \textsc{NP-Hard}, even when the latency constraints are relaxed. 
\begin{theorem}
\label{thm:budget}
The problem \textsc{Budget-Opt} is \textsc{NP-hard}. 
\end{theorem}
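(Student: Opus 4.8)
The plan is to exhibit a polynomial-time reduction from a known \textsc{NP-hard} problem to \textsc{Budget-Opt}, working in the regime where the latency constraints are vacuous (e.g.\ $L = \infty$, or $L_i = 0$ for all $i$), so that only the budget constraint, the assignment constraints, and the integrality constraints remain. The natural candidate is \textsc{Subset-Sum} (or equivalently \textsc{Partition} / \textsc{Knapsack}), since the budget constraint $\sum_{i,j} C_{i,j}\, x_{i,j} \le B$ together with the per-section choice constraint $\sum_i x_{i,j} = 1$ is structurally a knapsack-type packing problem: for each section we must pick exactly one model, paying a cost and collecting a score, and the total cost must fit under $B$ while the total score is maximized.

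Concretely, I would take an instance of \textsc{Knapsack} with items of weights $w_1,\dots,w_n$, values $v_1,\dots,v_n$, and capacity $W$, and build an instance of \textsc{Budget-Opt} with $n$ sections $d_1,\dots,d_n$ and $K = 2$ models $M_1, M_2$. Interpret $M_1$ as ``include item $j$'' and $M_2$ as ``exclude item $j$'': set $C_{1,j} = w_j$, $S_{1,j} = v_j$ and $C_{2,j} = 0$, $S_{2,j} = 0$. Set the budget $B = W$ and drop latency (take $L_i=0$). The assignment constraint forces exactly one of $x_{1,j}, x_{2,j}$ to be $1$, i.e.\ each item is either taken or not; the budget constraint becomes $\sum_j w_j x_{1,j} \le W$; and the objective $\sum_j v_j x_{1,j}$ is exactly the knapsack value. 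Thus an optimal solution to this \textsc{Budget-Opt} instance yields an optimal knapsack solution, and the transformation is clearly polynomial. (To get hardness of the decision version, add a target $T$ and ask whether objective $\ge T$; this matches the decision version of \textsc{Knapsack}, which is \textsc{NP-complete}.) One should also check that the $C_{i,j}$ produced are consistent with the cost model $C^I_i|d^I_{i,j}| + C^O_i|d^{avg}_{i,j}| + C^F_i$ — but since the $C^I_i, C^O_i, C^F_i$ and token lengths are free parameters of the problem, any nonnegative $C_{i,j}$ is realizable (e.g.\ via $C^F_i$ alone, or by scaling input costs), so this is not an obstacle.

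The main thing to be careful about is arguing that the reduction preserves optimality in both directions and that all numbers stay polynomially bounded (standard for \textsc{Knapsack}, whose \textsc{NP-hardness} is in the weak/binary-encoding sense). If one wants strong \textsc{NP-hardness} rather than weak, I would instead reduce from \textsc{3-Partition} or \textsc{Bin-Packing}-style problems, but for the stated claim (just ``\textsc{NP-hard}'') the \textsc{Knapsack}/\textsc{Subset-Sum} reduction suffices and is the cleanest. I expect the only mild subtlety — hardly an obstacle — is phrasing the equivalence precisely: the two-model gadget above makes the assignment constraint do exactly the ``in/out'' bookkeeping of knapsack, so the correspondence is immediate once the gadget is written down.
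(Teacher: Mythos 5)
Your proposal is correct and follows essentially the same route as the paper: a reduction from \textsc{Knapsack} using two models, one of which is free, with the per-section assignment constraint encoding the in/out choice and latency made vacuous (the paper realizes the per-section costs $w_j$ by setting token lengths $T_j = w_j/c$ rather than via $C^F_i$, which is the right way to do it since $C^F_i$ is per-model, not per-section — a detail your "scaling" remark gestures at correctly). Your variant with $S_{2,j}=0$ is in fact slightly cleaner than the paper's, which allows arbitrary $S_{i,2}=z_i$ and carries an additive constant $S_2$ through the equivalence.
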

\begin{proof}
We show \textsc{Budget-Opt} is \textsc{NP-hard} from \textsc{Knapsack} problem. 
Further details are provided in Appendix, Section \ref{sec:proofs}.
\end{proof}


Since \textsc{Budget-Opt} is \textsc{NP-Hard}, we relax it to a linear program, where we allow $0\leq x_{i,j} \leq 1$ in place of the integrality requirement. For obtaining the final allocation, we use the following simple rounding rule (breaking ties by choosing the lower cost model): 
 $\hat{x}_{i,j} = 1 \quad \text{if}\quad x_{i,j} \geq x_{i',j} \forall i'\in [k], 0$ otherwise. We empirically find that the above rounding violates budget by $<0.2\%$.

\subsubsection{Quality Aware Cost Minimizer}
Here we study theoretically another practically important variant of the problem \textsc{Cost-Min} where a quality threshold $Q$ must be maintained \emph{at a per instance level}, while minimizing the total costs. 
The corresponding integer linear program is given below. 
\begin{align}
\label{eq:cost-min}
\text{Minimize}\quad &\sum_{d_j \in \mathcal{D}}{\sum_{M_i \in \mathcal{M}}{C_{i,j}\cdot x_{i,j}}}\\\nonumber
\text{subject to} &\sum_{M_i \in \mathcal{M}}{{S_{i,j} \cdot x_{i,j}}} \geq Q \quad \forall d_j \in \mathcal{D},\\\nonumber
&\sum_{d_j\in \mathcal{D}}{\ell_{i,j}\cdot x_{i,j}} \leq L\quad \forall \quad M_i \in \mathcal{M}\\\nonumber
& \sum_{M_i\in \mathcal{M}}{x_{i,j}} = 1\quad \forall \quad d_j \in \mathcal{D}, \\\nonumber
&x_{i,j} \in \{0, 1\} \quad \forall \quad d_j \in \mathcal{D}, \quad \forall \quad M_i \in \mathcal{M}\\\nonumber
\end{align}

\begin{theorem}
\label{thm:costmin}
The problem \textsc{Cost-Min} is \textsc{NP-hard}.
\end{theorem}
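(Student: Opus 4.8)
The plan is to give a polynomial-time reduction to \textsc{Cost-Min} from a classical packing/scheduling problem, with the latency (SLA) constraints carrying the hardness. First I would observe \emph{why} the latency constraints must be the source of difficulty: if they are dropped, \textsc{Cost-Min} decomposes completely across sections — section $d_j$ independently picks the cheapest model $M_i$ with $S_{i,j}\ge Q$ (and the instance is infeasible iff some section has no such model) — and is solvable in polynomial time. Hence any hardness proof must exploit the coupling that a single model's total latency $\sum_{d_j}\ell_{i,j}x_{i,j}$ is capped at $L$, which is exactly a makespan / bin-capacity constraint shared by all sections routed to that model.

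Concretely, I would reduce from \textsc{Bin Packing} (equivalently from \textsc{Partition} when $K=2$). Given items of sizes $s_1,\dots,s_n$, a capacity $C$, and $K$ bins, build a \textsc{Cost-Min} instance with sections $d_1,\dots,d_n$ and the $K$ models playing the role of the $K$ bins. Take $L_i=1$ and choose input token lengths so that $\ell_{i,j}=s_j$ for every model $i$ (with output length $0$), and set $L=C$. Make the quality constraint vacuous by taking $S_{i,j}=Q$ for all $i,j$, and make the objective constant by taking $C_{i,j}=0$ for all $i,j$; the assignment constraints $\sum_{M_i}x_{i,j}=1$ already force each item into exactly one bin. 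Then a feasible $0/1$ solution exists iff the items pack into $K$ bins of capacity $C$, so deciding feasibility of \textsc{Cost-Min} — and a fortiori minimizing its cost — is \textsc{NP-hard}. If a non-degenerate objective is preferred, I would add a $(K{+}1)$-th fallback model with $\ell_{K+1,j}=0$ and $C_{K+1,j}=1$: a feasible solution then always exists, and the optimum equals $0$ iff the original items fit into the first $K$ bins, which again solves \textsc{Bin Packing}.

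The only routine checks are that the construction is polynomial (item sizes transfer directly into token lengths, so there is at most a linear blow-up; unary sizes are preserved, which would in fact yield strong \textsc{NP-hardness} via \textsc{3-Partition}), and that modeling $L$ as an \emph{expected} latency with mean-zero noise is irrelevant because we instantiate a deterministic, noiseless instance. I do not anticipate a genuine obstacle: the one conceptual point is recognizing that, unlike \textsc{Budget-Opt} — whose hardness came from a knapsack-style tradeoff inside the objective — the hardness of \textsc{Cost-Min} lives in the latency constraints, after which the packing reduction is immediate. An alternative route keeping latency trivial and instead letting the per-section quality threshold interact with a budget-like aggregate also works, but routing the argument through \textsc{Bin Packing}/\textsc{Partition} is the cleanest.
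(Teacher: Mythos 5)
Your proposal is correct and takes essentially the same route as the paper: the paper also makes the costs zero and the quality constraints vacuous, and reduces the resulting latency-feasibility question from \textsc{Partition} with $K=2$ models acting as two capacity-$L$ bins, exactly as in your \textsc{Bin Packing}/\textsc{Partition} construction. Your added observations (polynomial solvability without latency constraints, and the fallback model to de-degenerate the objective) are sound but not needed for the paper's argument.
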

\begin{proof}
We prove this by a reduction from \textsc{Partition}. Further details are provided in Appendix, Section \ref{sec:proofs}.
\end{proof}

\subsubsection{Polynomial Special Cases}
For two special cases, \textsc{Cost-Min} admits polynomial time algorithms. 

\begin{theorem}
\label{thm:greedy}
In the absence of latency constraints, an $O(K)$ greedy algorithm gives the optimal solution to \textsc{Cost-Min}.
\end{theorem}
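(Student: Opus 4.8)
The plan is to exploit the fact that, once the latency constraints are dropped, the integer program for \textsc{Cost-Min} decomposes completely across sections. The objective $\sum_{d_j}\sum_{M_i} C_{i,j}\,x_{i,j}$ is a sum of per-section terms; the assignment constraint $\sum_{M_i} x_{i,j}=1$ is local to each $d_j$; and the quality constraint $\sum_{M_i} S_{i,j}\,x_{i,j}\ge Q$ is also local to each $d_j$. The only constraints that ever coupled different sections were the per-model latency constraints $\sum_{d_j}\ell_{i,j}\,x_{i,j}\le L$, which are now absent. Hence the program splits into $n$ independent subproblems, one per section, and it suffices to solve each one optimally.

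Next I would analyze a single section $d_j$. Since $\sum_{M_i} x_{i,j}=1$ with $x_{i,j}\in\{0,1\}$, exactly one model, say $M_{i^\star}$, is selected, and then $\sum_{M_i} S_{i,j}\,x_{i,j}=S_{i^\star,j}$, so the quality constraint is equivalent to $S_{i^\star,j}\ge Q$. Thus the set of feasible choices for section $d_j$ is exactly $F_j:=\{\,M_i : S_{i,j}\ge Q\,\}$, and the cost of choosing $M_i\in F_j$ is simply $C_{i,j}$. The subproblem is therefore: pick $M_i\in F_j$ minimizing $C_{i,j}$, whose optimum is attained by $\arg\min_{M_i\in F_j} C_{i,j}$ — precisely the choice made by the greedy rule (scan the $K$ models, discard those with $S_{i,j}<Q$, and keep the surviving one of least cost, breaking ties arbitrarily). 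If $F_j=\emptyset$ the subproblem, and hence \textsc{Cost-Min}, is infeasible, which the same scan detects.

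Finally I would assemble the global claim: since the feasible region of the latency-free program is the product of the per-section feasible sets and the objective is the corresponding sum, a solution that is optimal in every coordinate block is optimal overall; concatenating the greedy's per-section picks yields such a solution. The running time is $O(K)$ per section — a single pass over the models — hence $O(nK)$ in total, matching the statement. I do not expect a genuine obstacle here; the only points needing care are stating the decomposition cleanly, noting the reduction of the quality constraint to $S_{i,j}\ge Q$ under the unit-assignment constraint, and flagging the infeasibility corner case.
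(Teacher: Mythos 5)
Your proposal is correct and matches the paper's argument in substance: both identify the per-section feasible set $F_j=\{M_i: S_{i,j}\ge Q\}$ and show that picking the minimum-cost model in $F_j$ for each section is globally optimal once the latency constraints (the only coupling across sections) are removed. The paper phrases this as an exchange argument (swapping any optimal deviation back to the greedy choice cannot increase cost), while you phrase it as a decomposition into independent subproblems; these are the same observation, and your version additionally flags the $F_j=\emptyset$ infeasibility case, which is a harmless refinement.
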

\begin{proof}
We show that a greedy algorithm is optimal in this case. Further details in the Appendix, Section \ref{sec:proofs}. 
\end{proof}

\begin{theorem}
\label{thm:flow}
When all the sections are equal in length in terms of tokens, then \textsc{Cost-Min} admits a polynomial time solution. 
\end{theorem}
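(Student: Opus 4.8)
The plan is to show that, under the equal-length hypothesis, \textsc{Cost-Min} degenerates into a \emph{transportation problem} (a structured min-cost flow), which is solvable in polynomial time and has an integral optimum. First I would use the assumption that every section $d_j$ has the same token length: the input token count $|d^I_{i,j}|$ is then determined solely by the tokenizer of $M_i$, and the output-length estimate $|d^O_{i,j}|$ for a fixed $p$-line summary likewise may be taken to depend only on $M_i$; hence both $\ell_{i,j}$ and $C_{i,j}$ become independent of $j$, and I write $\ell_{i,j}=\ell_i$, $C_{i,j}=C_i$. (In fact the reduction below works verbatim for arbitrary section-dependent costs $C_{i,j}$; only the latency term genuinely needs the equal-length assumption.) Consequently the per-model latency constraint $\sum_{j}\ell_{i,j}x_{i,j}\le L$ becomes $\ell_i\cdot|\{\,j: x_{i,j}=1\,\}|\le L$, i.e.\ a pure capacity bound: model $M_i$ may receive at most $u_i:=\lfloor L/\ell_i\rfloor$ sections (and no bound at all if $\ell_i=0$).

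Next I would dispose of the quality constraints combinatorially. Since $\sum_{M_i}x_{i,j}=1$ and $x_{i,j}\in\{0,1\}$, exactly one model is chosen per section, so $\sum_{M_i}S_{i,j}x_{i,j}\ge Q$ simply asserts that the chosen model lies in the feasible set $F_j:=\{\,M_i: S_{i,j}\ge Q\,\}$. If some $F_j=\emptyset$, the instance is infeasible; otherwise the task is: assign each section $d_j$ to some model in $F_j$, respecting the capacities $u_i$, so as to minimize $\sum_{j}C_{i(j)}$.

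Then I would encode this as a min-cost flow instance. Build a digraph with a source $s$, a node per section, a node per model, and a sink $t$: an arc $s\to d_j$ of capacity $1$ and cost $0$ for each section; an arc $d_j\to M_i$ of capacity $1$ and cost $C_i$ (or $C_{i,j}$, generally) for each $M_i\in F_j$; and an arc $M_i\to t$ of capacity $u_i$ and cost $0$ for each model. Feasible integral $s$--$t$ flows of value $n$ correspond exactly, and cost-preservingly, to feasible assignments, and conversely (the capacity-$1$ arcs out of the section nodes enforce $\sum_i x_{i,j}=1$). Because all capacities are integral, some optimal flow is integral, so computing a minimum-cost flow of value $n$ — by successive shortest augmenting paths, or any strongly polynomial min-cost flow algorithm — yields an optimal $\{0,1\}$ solution to \textsc{Cost-Min}; if the maximum $s$--$t$ flow is below $n$, the instance is reported infeasible. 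The network has $O(n+K)$ nodes, $O(nK)$ arcs, and flow value $n$, so the procedure runs in polynomial time.

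The main obstacle is essentially bookkeeping rather than mathematics: one must argue cleanly that the equal-length hypothesis makes $\ell_{i,j}$ (and, in the uniform variant, $C_{i,j}$) depend only on $i$, given that output lengths are only \emph{estimated}, and one must state carefully that the flow--assignment correspondence preserves optimality in both directions. Both are immediate once the modeling is pinned down, and the remainder is a textbook reduction to transportation/min-cost flow.
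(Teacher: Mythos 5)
Your proposal is correct and follows essentially the same route as the paper: both convert the per-model latency constraint into an integer section-capacity $\lfloor L/\ell_i\rfloor$ using the equal-length hypothesis, encode the quality threshold as edge feasibility $S_{i,j}\ge Q$, and solve the resulting source--sections--models--sink min-cost flow of value $n$, invoking integrality of optimal flows. The only differences are cosmetic (the paper normalizes token capacities by the common section size $d$ and names Bellman--Ford explicitly, while you phrase the capacity directly in sections), so no further changes are needed.
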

\begin{proof}
This problem can be modeled as a minimum cost maximum flow problem and as a result admits a polynomial time optimal solution by the Bellman Ford algorithm. 
Further details are provided in Appendix, Section \ref{sec:proofs}.
\end{proof}

\subsection{Estimating the Output Quality of LLMs}
\label{sec:predictor}
For the above optimization, a key ingredient is estimating the output quality of LLMs for the summarization task for each section of a document. 
We next propose a model to estimate the output quality without actually making the model invocations at inference time. 

To assess the output quality, it is essential to establish a quantitative metric for evaluation. In scenarios like multiple choice question answering and Natural Language Inference (NLI) tasks related to document processing, we can rely on accuracy and NLI metrics, respectively, to quantitatively measure performance. However, in cases where the task is inherently more subjective and qualitative, like text summarization, selecting an appropriate evaluation metric becomes less straightforward.
In the literature, different scores have been used for variants of \textsc{ROUGE} scores, such as \textsc{ROUGE-1}, \textsc{ROUGE-2}, and \textsc{ROUGE-L} \cite{lin-2004-rouge}have been used as also \textsc{BLEU} metrics\cite{papineni-etal-2002-bleu} and \textsc{METEOR} scores\cite{banerjee-lavie-2005-meteor}. These metrics however don't have a deep understanding of the semantics or context of the language as they are based on n-gram matching, which can lead to inaccuracies, especially in tasks that require nuanced or context-aware language generation. 
 \textsc{BERTScore}\cite{zhang2019bertscore} was shown to capture semantic notions of generated text better, hence more suitable for quantitative evaluation of the qualitative perception of the summary.


Figure \ref{fig:bsp} shows our proposed model framework. It takes in as input a given piece of text and generates scores for each model in the cascade. These scores represent how well each model would summarize the text, compared to a gold standard. We employed a \textsc{BERT} backbone in conjunction with a regressor head for our predictor. Additionally, we incorporated LayerNorm between successive layers of the regressor and identified that GELU activation yielded the most favorable results. 

\begin{figure}[htbp]
    \centering
    \includegraphics[width = 0.5\textwidth]{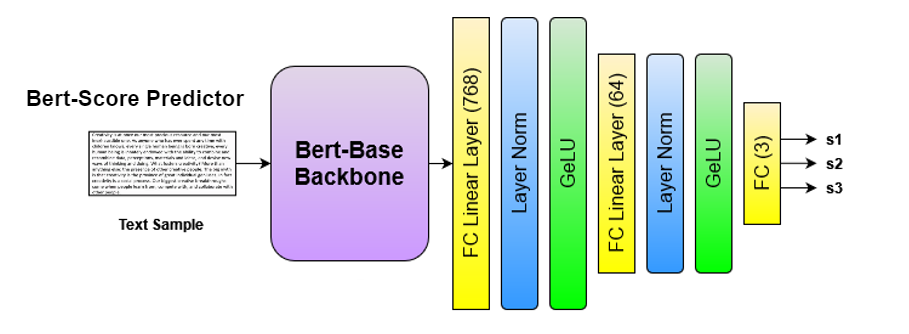}
    \caption{Bert Score Predictor}
    \label{fig:bsp}
\end{figure}


\noindent\textbf{Training Datasets: }
For training, we begin by annotating our datasets with reference BERTScores. These scores are determined by first obtaining the gold summary by querying the most advanced large language models (GPT-4, GPT-3.5-Turbo). Subsequently, we query each model within our cascade to generate candidate summaries, on which the BERTScores are calculated. We've curated two distinct datasets: Dataset-I and Dataset-II, following this methodology. Dataset-I comprises approximately 1000 text sections extracted from real-world PDF documents obtained from the Adobe Inc. Gold summaries for this dataset were generated using GPT-4, and the cascade of models used included Text-Davinci-003, Text-Curie-001, and GPT-3.5-turbo. On the other hand, for Dataset-II, we selected around 3000 text samples from various sources such as bigpatent\cite{1906.03741}, samsum\cite{1911.12237}, wiki bio\cite{1603.07771} datasets, Each data point in Dataset-II\footnote{We release this annotated dataset to the community} was annotated with BERTScore, taking GPT-3.5-Turbo's summaries as the reference gold standard. In this case, the cascade consisted of Text-Davinci-003, Text-Curie-001, and Vicuna-13b.

\noindent{\textbf{Loss Function:}}
Using these BERTScores as ground truth for each input text $d_i$, the module generates `$K$' scores where $K$ is the number of LLMs considered in the cascade ($K=3$, in our case). Let $\mathbf{y}^i \in {\mathbb{R}_{\geq0}}^K$ denote the vector of the actual BERTscores incurred on the $K$ models for section $d_i$ and $\mathbf{\hat{y}}^i \in {\mathbb{R}_{\geq0}}^K$ is the predicted vector. 
For a pair of distinct models $k_p$ and $k_q$, let $\Delta^i_{k_p,k_q} = \mathbf{y}^i(k_p) - \mathbf{y}^i(k_q)$ and $\hat{\Delta}^i_{k_p,k_q} = \mathbf{\hat{y}}^i(k_p) - \mathbf{\hat{y}}^i(k_q)$. 
For a batch size $n'$, the loss is computed as a combination of : \\
\textbf{1. Mean Square Error (MSE) Loss:}
    $$\mathcal{L}_{MSE} =\frac{1}{n'}\sum_{i \in [n']}{{\lvert\lvert \mathbf{y}^i - \mathbf{\hat{y}}^i\rvert\rvert}^2}$$.  
\textbf{2. Pairwise difference Loss: } \\
    $$\mathcal{L}_{diff} = \frac{1}{n'}{\sum_{i \in [n']}\frac{2}{K(K-1)}{\sum_{k_p, k_q \in [K], k_p\neq k_q}{(\Delta^i_{k_p, k_q} - \hat{\Delta}^i_{k_p, k_q})^2}}}$$.
Hence, our loss function was $\mathcal{L}_{total} = \alpha \mathcal{L}_{MSE} + \beta \mathcal{L}_{diff}$, where $\mathcal{L}_{diff}$ was added as a regularizer to the MSE loss to help reinforce or preserve pairwise trends between models, which becomes important in model selection. 

\noindent{\textbf{Training Details:}} We have used the pre-trained 
\verb|Bert-base-uncased| available from Hugging Face\footnote{\url{https://huggingface.co/bert-base-uncased}} and fine-tuned on our datasets. The initial learning rate used was $1 \me{-3}$, with Adam optimizer, with hyperparameters $\alpha=1$ and $\beta=2.4$ and trained on one Nvidia a10g GPU for $10$ epochs.
On Dataset I, the training MSE obtained was $5.8 \me{-3}$ and the test MSE was $6.5 \me{-3}$. 
On Dataset II, the training MSE obtained was $2.7 \me{-3}$ and the test MSE was $9.5 \me{-3}$. 

\begin{table}
\centering
\resizebox{0.5\textwidth}{!}{
\begin{tabular}{|c|c|c|c|}
\hline
Method & Cost (1e-3 \$) & Allocation GPT3.5/Davinci/Curie & Avg. BertScore\\
\hline
Only Text-Davinci-003 & 3549.71 & [0.00, 1.00, 0.00] & 0.746\\
Only GPT-3.5-Turbo & 709.94 & [1.00, 0.00, 0.00] & 0.761\\
\textbf{Our Method (B = 370)} & 370.01 & [0.16, 0.00, 0.84] & 0.708\\
Random (B = 370) & 389.39 & [0.16, 0.00, 0.84] & 0.693\\
\textbf{Our Method (B = 550)} & 550.12 & [0.79, 0.03, 0.18] & 0.770\\
Random (B = 550) & 603.77 & [0.77, 0.07, 0.15] & 0.748\\
\textbf{Our Method (B=1200)} & 1201.01 & [0.62, 0.27, 0.11] & 0.782\\
Random (B = 1200) & 1378.02 & [0.62, 0.27, 0.11] & 0.748\\
\hline
\end{tabular}}
\caption{Table on Dataset I}
\label{tab:1}
\end{table}

\subsection{Experiments on Smart Router}
For evaluating our approach, we have used GPT-3.5-Turbo, Text-Davinci-003, Text-Curie-001 in our cascade. We compare our approach against the scenarios where only Text-Davinci-003 (most expensive) or only using GPT-3.5-Turbo was used for all the sections. We also created another baseline of random allocation. Given the optimal fraction allocation percentages for each of the model if we randomly sample sections, what will be the cost and the average score. Our method performs significantly better than these baselines on both Dataset I (given in Table \ref{tab:1} )and Dataset II (given in Table \ref{tab:2}). For Dataset I, while incurring a cost of \textbf{550.12} dollars, we achieve a performance of \textbf{0.770} which is \textbf{84.50\%} cost reduction and \textbf{3.2\%} performance improvement over the ``Only Use Text-Davinci-003" baseline and \textbf{22.55\%} cost reduction and \textbf{1.2\%} performance improvement over the ``Only Use GPT-3.5-Turbo" baseline. 

Similarly, on Dataset II, incurring a cost of \textbf{891.08} dollars, we achieve a performance of \textbf{0.773} which is \textbf{90\%} cost reduction and similar performance to the ``Only Use Text-Davinci-003" baseline and similar cost but \textbf{7.21\%} performance improvement over the ``Only Curie" baseline. Also, incurring a cost of only \textbf{500} dollars, we achieve performance of \textbf{0.751} which in comparison to ``Only Curie" baseline gives a cost reduction of \textbf{43.9\%} and performance improvement of \textbf{4.16\%}.

\begin{table}
\centering
\resizebox{0.5\textwidth}{!}{
\begin{tabular}{|c|c|c|c|}
\hline
Method & Cost (1e-3 \$) & Allocation Davinci/Curie/Vicuna & Avg. BertScore\\
\hline
Only Text-Davinci-003 & 8917.28 & [1.00, 0.00, 0.00] & 0.772\\
Only Curie & 891.728 & [0.00, 1.00, 0.00] & 0.721\\
Only Vicuna & 234.8 & [0.00, 1.00, 0.00] & 0.686\\
\textbf{Our Method (B = 500)} & 500.0038 & [0.072, 0.442, 0.486] & 0.751\\
Random (B = 500) & 1151.49 & [0.072, 0.442, 0.486] &  0.722\\
\textbf{Our Method (B = 891)} & 891.08 & [0.196, 0.487, 0.317] & 0.773\\
Random (B = 891) & 2195.73 & [0.196, 0.487, 0.317] & 0.718\\
\textbf{Our Method (B=1500)} & 1493.99 & [0.349, 0.495, 0.156] & 0.786\\
Random (B=1500) & 3681.33 & [0.349, 0.495, 0.156] & 0.718\\
\hline
\end{tabular}}
\caption{Table on Dataset II}
\label{tab:2}
\end{table}

We have also compared with an LLM Cascade baseline inspired by FrugalGPT. 
FrugalGPT calls three LLM APIs sequentially to generate the query result. If the response from an LLM APIs exceeds a certain performance threshold, no further API calls are made. We use two different ordering of APIs for our experiments. First ordering is Text-Curie-001, GPT-3.5-Turbo and Text-Davinci-003 (FrugalGPT davinci) and for the second ordering, we swap GPT-3.5-Turbo and Text-Davinci-003 (FrugalGPT 3.5). Figure \ref{fig:frugal} shows the plot of cost vs Avg. BERTScore for different approaches. It is clear that our method achieves the same BERTScores as the FrugalGPT inspired baselines at significantly lower cost.

\begin{figure}
    \centering
    \includegraphics[width = 0.45\textwidth]{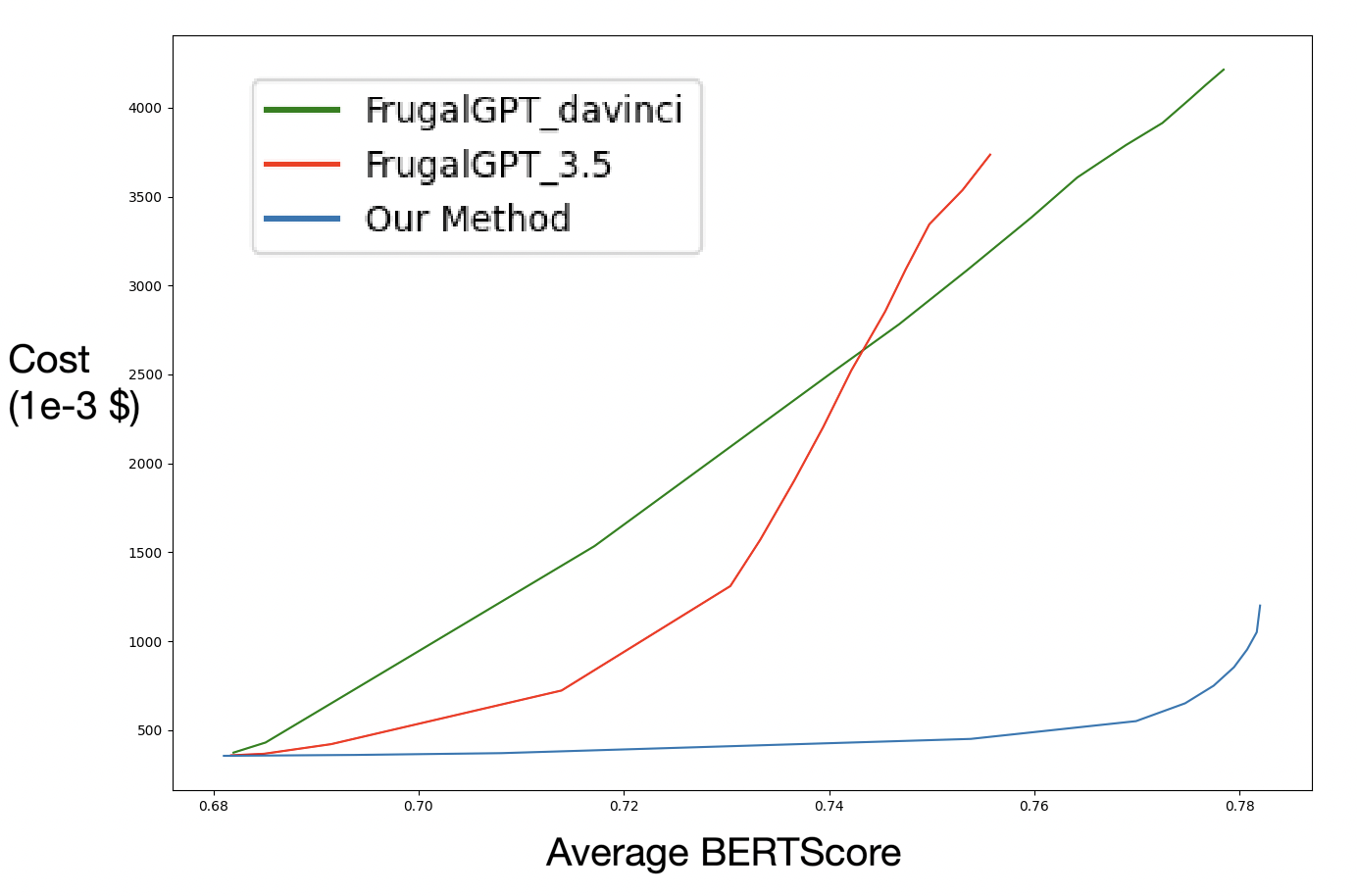}
    \caption{Comparison with an LLM Cascade baseline inspired by FrugalGPT. We achieve same quality at considerably lower costs and latency (not shown here). }
    \label{fig:frugal}
\end{figure}

\subsection{User Study}
We also conducted a user survey to see how well our predictor module (which is based on BERTscore) aligns with human preferences. Participants were shown a piece of text, along with summaries by two different LLMs, and asked to judge which summary they preferred (options: model A, model B, both summaries are adequate, neither summary is adequate). The LLMs used here were text-davinci-003 (\$0.02 / 1K tokens) and text-curie-001 (\$0.002 / 1K tokens, 10x cheaper than davinci). The participants were not made aware of which summary is generated by which model.

Out of the 10 texts shown to users, half of the texts were where our predictor module predicted that curie (the cheaper LLM) will be adequate for summarization. 3 of the texts were where our predictor module predicted that davinci would be significantly better than curie, whereas 2 texts were those where there was no significant difference between the predictions (our module predicted both LLMs to perform similarly). We obtained responses from over 50 users (total data points $n > 400$)

\begin{figure}
    \centering
    \includegraphics[width = 0.4\textwidth]{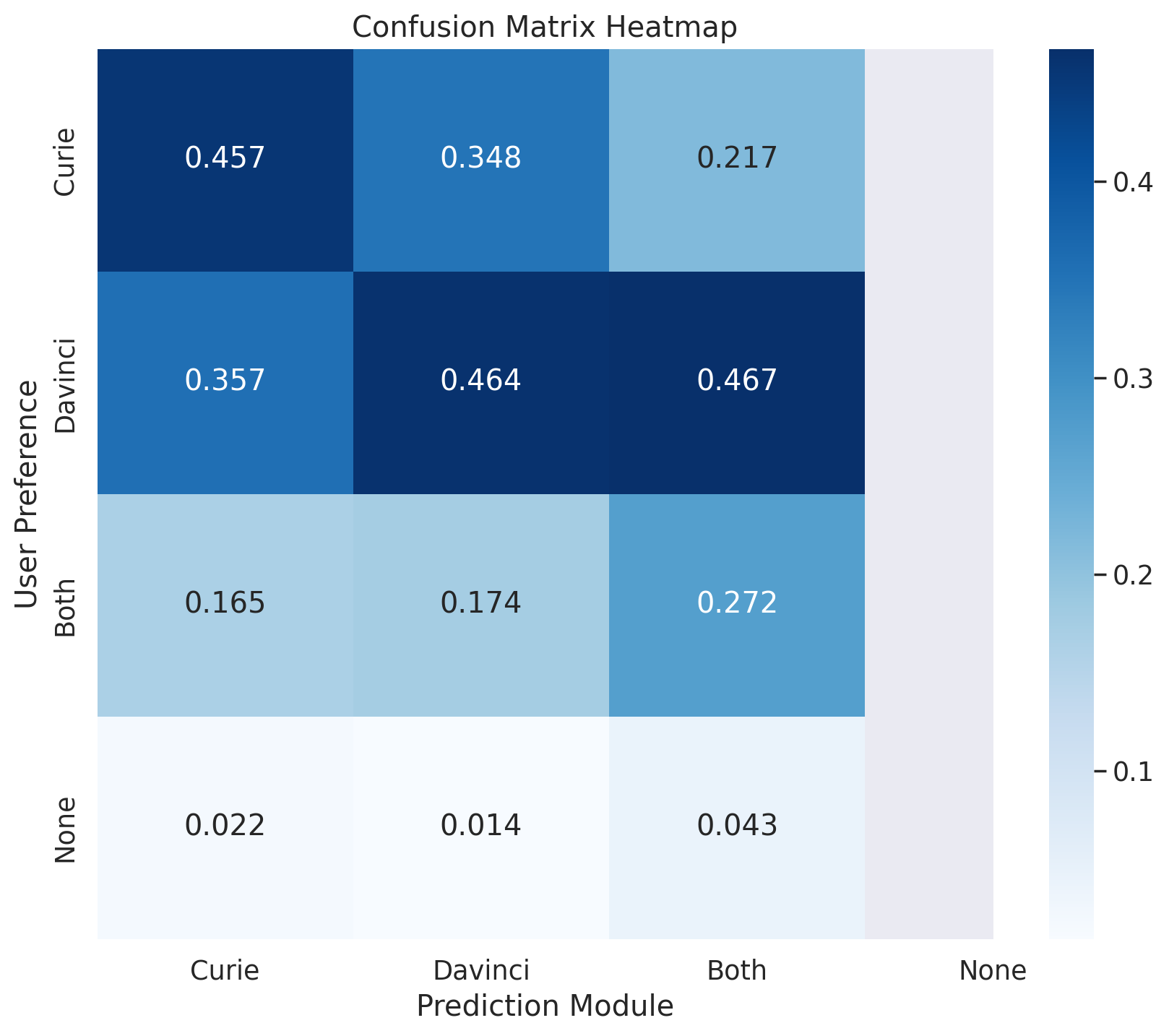}
    \caption{Confusion matrix for user study}
    \label{confusion-matrix-figure}
\end{figure}
Figure \ref{confusion-matrix-figure} shows the normalized confusion matrix, comparing our prediction module's suggested LLMs (based on just the input text alone) with user preference of final summaries generated. As we can see, there is strong correlation with human preference when our model predicts either davinci or curie. This means we can effectively predict how users would prefer a model generated summary, when our model predicts a significant gap between the LLMs. When the predicted score gap between LLMs was low (when our model predicted `both'), we find low correlation with human preference. Looking at the actual questions, we find that humans strongly preferred `both' in one of the questions, while preferring davinci for the other. This points to the overall hardness of predicting the `correct' LLM when both models are close in performance. However, when there is a significant performance gap, our module is able to predict it with high correlation with human preference.

\begin{figure}[htbp]
    \centering
    \includegraphics[width = 0.45\textwidth]{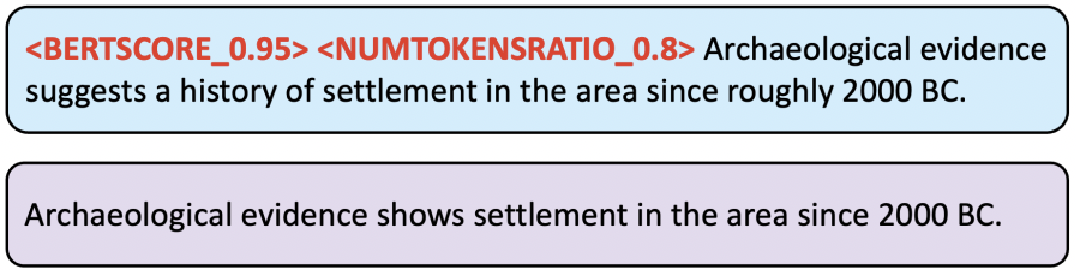}
    \caption{Token optimized sentence simplification example.}
     \label{SEN-EXAMPLE}
\end{figure}

\section{Optimizing Token Length}
Apart from optimizing the LLM selection, we propose optimizing the input token length directly without degrading the quality by much, to further reduce costs in a quality aware manner. Not just costs, LLMs often offer a restricted token context window\footnote{\url{https://platform.openai.com/docs/guides/gpt.}} that can make fitting the entire query into a single prompt infeasible. Therefore, reducing the tokens in a smart way can be helpful.

As already stated the usage costs of LLMs depend primarily on the number of input and output tokens and API calls.
Reducing tokens can cause a loss in information and meaning, resulting in depreciated response quality. Moreover, token count relies on the LLM tokeniser, and thus, any token reduction scheme must incorporate the respective tokenisers to be consistently usable across LLMs. Token count reduction is generally an unexplored area in literature. There is limited work and no dataset dedicated to it. There exists some prior work in sentence or passage level paraphrasing \cite{2005.00352, martin2020controllable} while attempting to preserve the information and meaning, however, these are not directly applicable for token count reduction. A related work, GPTrim is an opensource python library which uses heuristics like removal of punctation, stop words and stemming to reduce token count, however, the quality often suffers. Hence, quality aware reduction in token count becomes challenging.

We propose a Token Optimization module that consists of two main components: (i)Token Optimized Text Simplification, (ii) Token Optimization Heuristics.  

\begin{figure} [h]
    \centering
    \includegraphics[width = 0.45\textwidth]{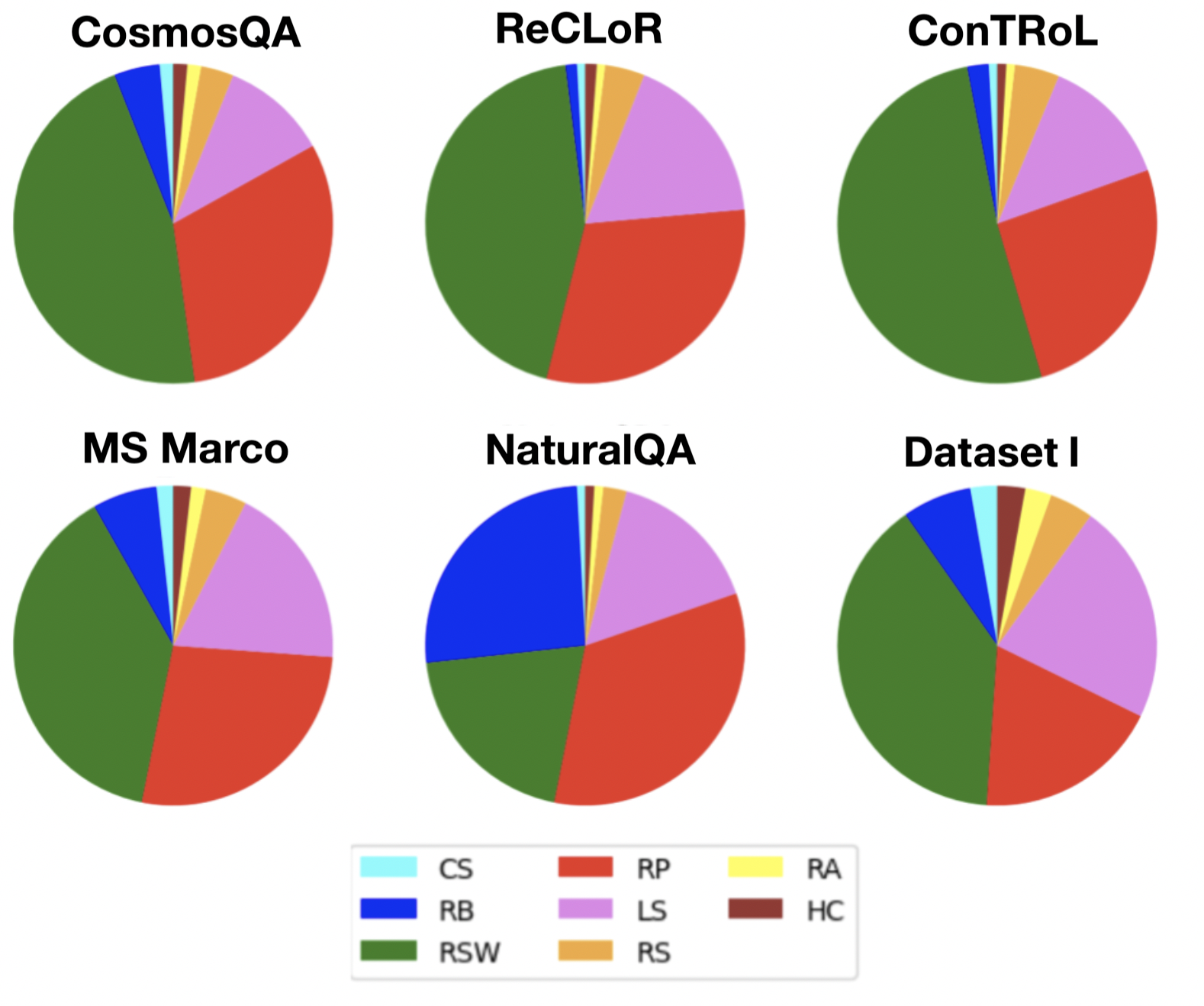}
    \caption{Ablation study of various heuristics}
    \label{fig:ablation}
\end{figure}

\subsection{Token Optimized Text Simplification}
\label{muss}
We propose simplifying the sentences in input prompts in a token aware manner, while preserving semantics to maintain the quality of outputs. We took inspiration from the work of Martin et al. \cite{2005.00352} who build a sequence-to-sequence model for generating audience centric simplifications for easier readability. They adapt a discrete parameterization mechanism that provides explicit control on simplification via various parameters like number of characters, Levenshtein similarity \cite{levenshtein}, word frequency ratio and dependency tree depth \cite{martin2020controllable}. To control various parameters while simplification at inference time, the parallel training data is labelled with tags corresponding to the desired controllable parameters. 
We build upon this work and leverage the above technique to control the token count and information loss in the paraphrased sentences.

We train our model on the WikiLarge \cite{1703.10931} dataset. The dataset contains 296,402/2,000/359 samples (train/validation/test)
of automatically aligned complex-simple sentence pairs from English Wikipedia and Simple English Wikipedia. We label the complex-simple sentence pairs with two parameters, \textsc{NUM\_TOKENS\_RATIO} and \textsc{BERT\_SCORE}. The former corresponds to the ratio of the number of tokens (using OpenAI's cl100k-base \cite{Tiktoken} tokenizer) in the simple and the complex sentence, and the latter is the BERTScore \cite{zhang2019bertscore} between the two sentences\footnote{We release this annotated dataset to the community as well.}.

The model is provided with oracle information on the target sequence in the form of control tokens prepended to the source sequence. For example, if the desired token count in the target sequence is $70\%$ of the token count in the source sequence while the desired BERTScore should be ~$0.95$ with the original sentence,, we append \textsc{<BERTSCORE\_0.95> <NUM\_TOKENS\_RATIO\_0.70>} tag to the source sentence.

\noindent\textbf{Training Details:}
Our backbone architecture is BART-large \cite{1910.13461}, a transformer encoder-decoder (seq2seq). We use the fairseq \cite{1904.01038} implementation for BART-large from \cite{2005.00352}, keeping the optimization procedure and hyper-parameters the same as the original implementation. The model was trained on 4 Nvidia a10g GPUs for approximately 10 hours. Figure \ref{SEN-EXAMPLE} shows an example output of the model. Further examples are listed in Appendix, Section \ref{sec:tables}, Table \ref{tab:qual} for qualitative evaluation by the reader. 

\subsection{Token Optimization Heuristics}
\label{heur}
Here we describe some general heuristic rules that we observed can be applied for reducing token count while maintaining quality. We discuss the rules, as well as their effects and the associated optimization problem of applying them. 
We chose the OpenAI tokenizer tiktoken \cite{Tiktoken} for implementation, experimentation, and testing. 
We manually inspected the tokenized version of samples of texts taken from question-answering datasets like ReCLor \cite{reclor}, LogiQA \cite{logiqa}, and MS-Marco \cite{1611.09268} and analyzed the tokenizer inefficiencies. Based on these observations, we devise generalizable rules to edit the words or phrases to reduce token count while retaining the maximum information of the original text. In total, we devise eight heuristics, the details of which can be found in Table \ref{tab:heur}. 

\begin{table*}[htbp]
    \centering
    \resizebox{\textwidth}{!}{
    \begin{tabular}{|c|l|}\hline
    \textbf{Heuristic (Abbv.)}   &  \textbf{Description} \\\hline
     Adjust Spaces and Capitalizations (CS)    & Prepending space and changing the case of the first letter of some words reduce the token count.  \\\hline
      Replace Synonyms (RS)   & We use the thesaurus \cite{thesaurus} synonym dictionary to replace high token count words with their less token count counterparts. \\\hline
      Lemmatization and Stemming (LS) & We implement the lemmatization of words by first stemming using the NLTK stemmer \cite{NLTK} and then using spell correction with \cite{spellcheck}. This is done only in cases where there is a reduction in the tokens.\\\hline
      Bracket Removal (RB) & Removing round parenthesis is found to save tokens.\\\hline
      Handle Compound Words (HC) & We create a dictionary of prefixes and split compound words by adding a space after the prefix in the cases where there is a token count reduction.\\\hline
      Stop Word Removal (RSW) & Removal of selective stop words is found to save tokens.\\\hline
      Punctuation Removal (RP)& Removal of selective punctuation marks saves tokens.\\\hline
      Handle Acronyms  (RA) &  We remove the dots between the letters of an acronym to reduce the token count where applicable.\\\hline
    \end{tabular}}
    \caption{Token Reduction Heuristics}
    \label{tab:heur}
\end{table*}

\noindent\textbf{Token Optimization Module - Ablation Study}\\
We compress some Question-Answering, NLI and Text Summarization datasets using our token optimization module with the above-mentioned heuristics (details of datasets used given in the Appendix Section \ref{sec:datasets}). We evaluate and plot the contributions of each heuristic on the various datasets (Fig. 
\ref{fig:ablation}). Table \ref{tab:compression} lists the token compression obtained on various datasets. 

\noindent\textbf{Optimized application of Heuristics}\\
Let us say we have a passage $P$ where the sentences of the passage are $\{s_1, s_2, \ldots, s_n\}$. Further, we have $\{H_1, H2, \ldots, H_m\}$ as our token trimming heuristics. Define $x_{i,j}$ as the indicator variable if heuristic $H_j$ is selected to be applied on sentence $s_i$. Define $c_{i,j}$ as the cost i.e., the estimated performance degradation and let $p_{i,j}$ be the profit i.e., number of tokens saved upon applying $H_j$ to $s_i$. Let us say we can tolerate a maximum performance loss of $C$ , then the choice of heuristics for a given $s_i$ reduces to the knapsack problem, where the capacity is $C$, cost is $c_{i,j}$ and profit is $p_{i,j}$ for heuristic (item) $H_j$. Once we solve the knapsack problem approximately, we will have for each sentence which heuristics to apply. Since the number of heuristics is $\leq 8$, we brute force through the search space to determine the optimal order of application of these heuristics on each sentence. 

\begin{table}[H]
    \centering    
    \begin{tabular}{|c|c|}
    \hline
     \textbf{Dataset}   & \textbf{Compression \%}  \\\hline
       CosmosQA  & 18.27  \\\hline
        ReCLoR  &  18.70 \\\hline
        ConTRoL & 21.44 \\\hline
        Natural QA & 20.91 \\\hline
        MS Marco & 21.44 \\\hline
        Dataset I & 22.07 \\\hline
    \end{tabular}
    \caption{Token Compression  obtained on various datasets.}
    \label{tab:compression}
\end{table}

\begin{figure}[h]
    \centering
    \includegraphics[width = 0.4\textwidth]{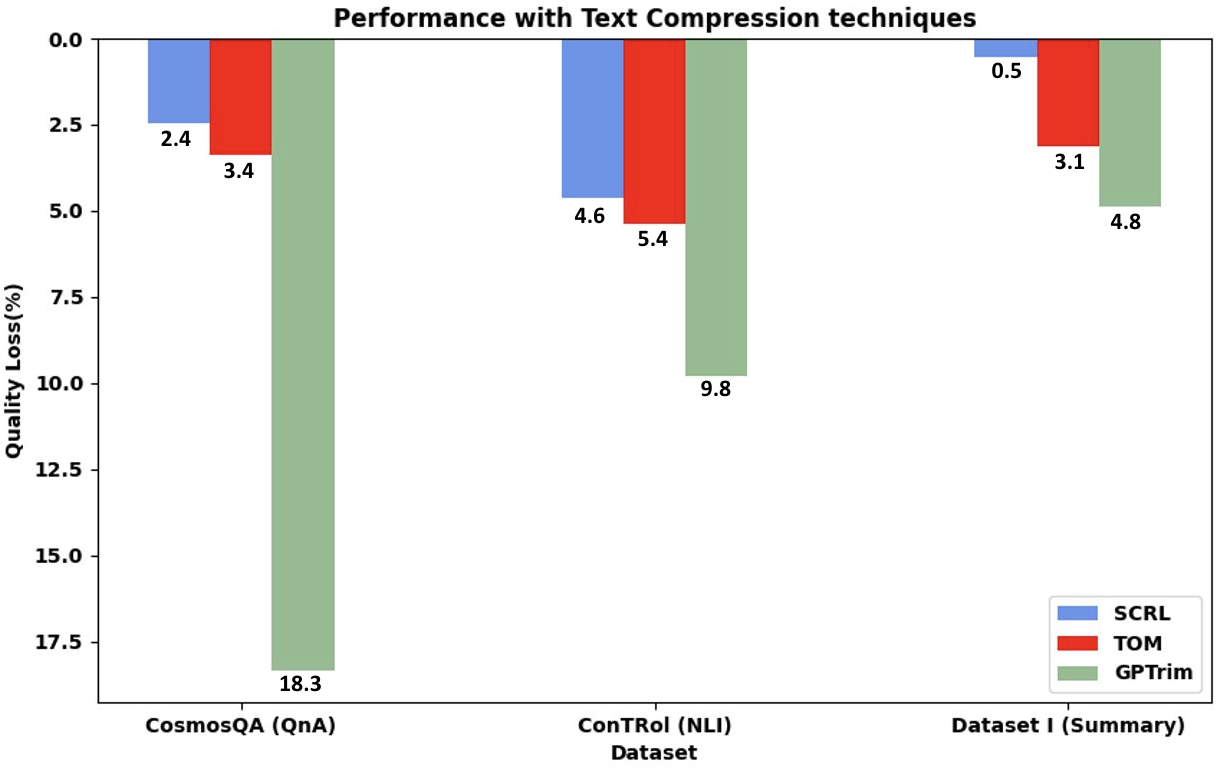}
    \caption{Quality Loss}
    \label{Performance Loss}
\end{figure}

\subsection{Experiments on Token Optimization}

We experiment on 3 datasets: namely CosmosQA(QnA), Control(NLI), Dataset I (Summary) (Tables in Section \ref{sec:tables} in Appendix). We compare our method against GPTrim and SCRL. The original context is converted into a simplified version by the simplification module. On top of this simplified context, various heuristics are applied to further reduce the token count and complexity. The modified context is then used as the input context for the concerned task. We find out that our method and SCRL lead to comparable loss in performance with more compression being achieved by our token optimisation module. GPTrim, on the other hand, though providing highest compression percentage also leads to much higher loss in performance as can be seen in Figure \ref{Performance Loss} and \ref{Compression Achieved}.

\begin{figure}[htbp]
    \centering
    \includegraphics[width = 0.4\textwidth]{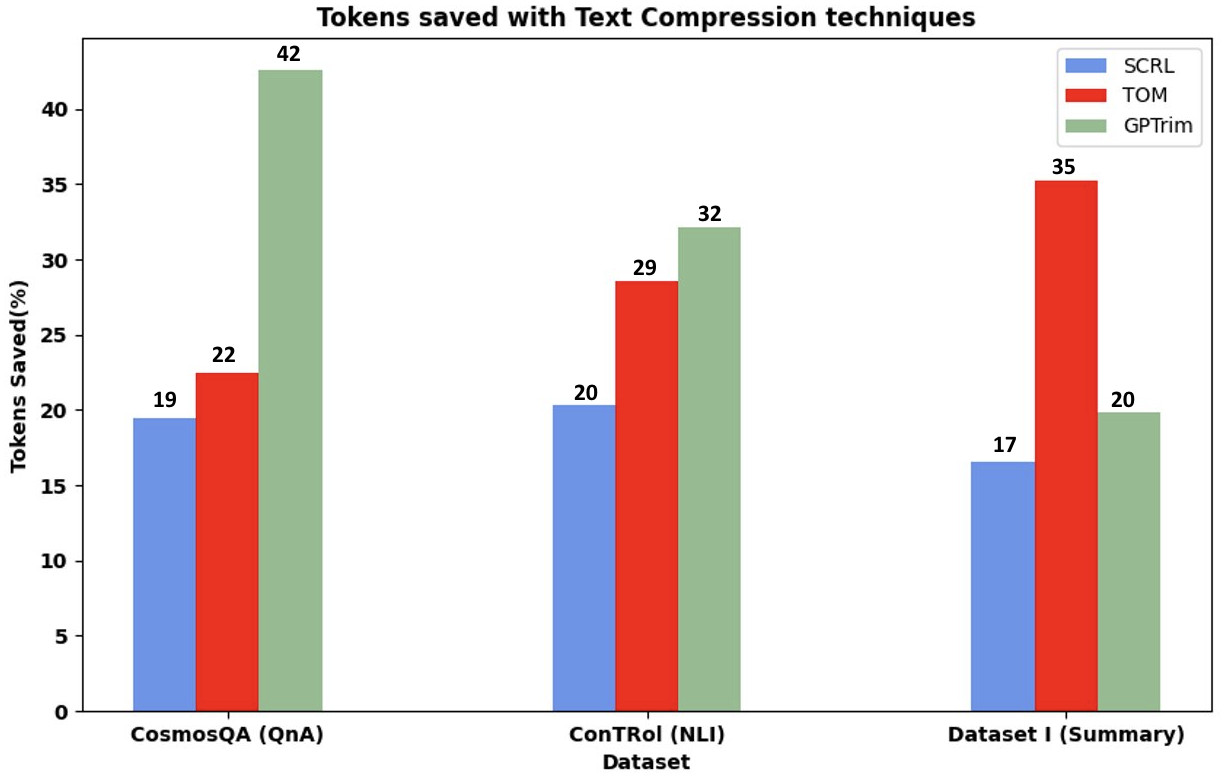}
    \caption{Compression Achieved}
    \label{Compression Achieved}
\end{figure}

We further experiment with optimized token reduction heuristics by controlling the quality loss parameter comparing to a brute force application of all heuristics in a fixed order (Appendix). 
We see that by setting the loss threshold, we are able to reduce the quality loss in a controlled manner, while achieving similar token reduction.

\subsection{Validation of entire pipeline QC-Opt}
  Without Token Optimization module, the cost incurred and the average BertScore are 891.08 and 0.773 respectively on Dataset II for a budget of $891$ (Table \ref{tab:2}). For the full pipeline (Smart Router + Token Optimization), the cost incurred and average BertScore are 579.429 and 0.654 respectively. 

\textbf{Latency: }We estimate the response latency of API calls made to GPT-3.5-Turbo as 1.5 seconds, Text-Davinci as 2.0 seconds and Text-Curie as 0.4 seconds \cite{latency}. LP solving, BERT-Score predictor and Token Optimization Module operate at the order of milliseconds which can be neglected. Assuming GPT-3.5-Turbo response latency as our baseline, at a budget of 550 [Table \ref{tab:1}] we estimate \~13\% reduction in total API call wait time owing to a significant percentage of queries being routed to Text-Curie, having lower response time.

\section{Conclusion and Future Work}
We present QC-opt and its various components, together with a theoretical study of the problems for optimizing the usage costs of LLMs while maintaining output quality. We have shown significant cost savings, and comparable quality in most cases, and in some cases, even improvement in quality due to context-based smarter choice of LLMs. We would like to extend our framework to the fully online setting, where the LLM quality and suitability estimation can be done contextually in an online manner.

\bibliography{ref}
\bibliographystyle{acm}
\appendix

\section{Proofs}
\label{sec:proofs}

\subsection{Proof of Theorem \ref{thm:budget}}
\begin{proof}
Consider a relaxed instance, where there are no latency constraints and there 
are only $2$ models: $M_1$ that has $c>0$ cost per token and $M_2$ has $0$ cost per token. Let us consider an input document $\mathcal{D} = \{d_i\}$, where
$d_i$ has a accuracy score $S_{i,1}$ in $M_1$, and $S_{i,2}$ in $M_2$. 
On model $M_1$, the expected (input + output) token length of $d_i$ is $T_i$ and 
hence its cost is $c \cdot T_i$.
Our goal is to maximize the total quality score of the assignments while maintaining the total cost $\leq B$, where B is the budget.

Let $\mathcal{D}’$ denote the set of document sections where $S_{i,2} \geq S_{i,1}$. Without loss of generality, any optimal solution would assign $\mathcal{D}'$ to $M_2$, as otherwise, we can always swap the assignment and get better or same quality score at a lower cost.
Hence, we can remove these from the decision problem.  

Let $\mathcal{D}''$ denote $\mathcal{D} \setminus \mathcal{D}'$. 
Without loss of generality, for each $d_i \in \mathcal{D}''$  let $S_{i,1} = S_{i,2} + \Delta_{i}$, where $\Delta_i > 0$. 

Let the total quality score of the any feasible solution be $S_F$. This consists of scores from sections assigned to $M_2$ as well as $M_1$. Let the sections from $\mathcal{D}''$ 
assigned to $M_1$ be $\mathcal{D}_1$ and those from $\mathcal{D}''$ assigned to $M_2$ be $\mathcal{D}_2$.  
Therefore: 
\begin{align}
S_F &= \sum_{d_i \in \mathcal{D'}}{S_{i,2}} + \sum_{d_j\in \mathcal{D}_1}{S_{j,2} + \Delta_j} + \sum_{d_k \in \mathcal{D}_2}{S_{k,2}}\\\nonumber
& = \sum_{d_i \in \mathcal{D}}{S_{i,2}} + \sum_{d_j \in \mathcal{D}_1}{\Delta_j} = S_2 + \sum_{d_j \in \mathcal{D}_1}{\Delta_j}
\end{align}
where $S_2$ is constant, as defined by the input instance. An optimal solution would be maximizing the second component of the above in a feasible way. 
Therefore, the optimization problem reduces to the following: finding the subset of sections $d_i$ from $\mathcal{D}''$, each of cost $c \cdot T_i$, that can be feasibly assigned to $M_1$, without violating the budget $B$, while maximizing the quality score (sum of $\Delta_i$'s) of the assigned sections. This exactly equivalent to \textsc{0-1 Knapsack}. 
Formally, we are given an instance of \textsc{0-1 Knapsack} with $n$ items, each item has value $v_i$ and weight $w_i$, and a knapsack with capacity $C$. We create an instance of our problem with $n$ sections. For each section $i$, we let $S_{i,2} = z_i$ where $z_i \geq 0$ is a random number and $\Delta_i = v_i$. We choose the cost of $d_i$ as $T_i = \frac{w_i}{c}$ and budget $B=C$. We can see that if there exists a feasible solution of total value $V$ in knapsack, that implies that \textsc{Budget-Opt} on the created instance has a feasible solution of quality score at least $V + S_2$, where $S_2 = \sum_{i\in [n]}{z_i}$ (by using the corresponding assignments). Similarly, if our problem has a feasible solution of quality score $Q'$, that implies, that there exists a feasible solution of value at least $Q' - S_2$ for the Knapsack instance. This completes the proof. 

\end{proof}
\subsection{Proof of Theorem \ref{thm:costmin}}
\begin{proof}
For the \textsc{NP-hardness} proof, let us consider a simplified version of the problem where there are only $2$ models, each with $0$ cost and the quality constraints are satisfied for both the models for both the sections. Let us consider the feasibility version of the problem. Specifically, the decision question is whether there exists an assignment of the sections to the $2$ models such that the latency constraints are satisfied for each model. We reduce from \textsc{Partition} for this problem. 
Given an instance of \textsc{Partition} with $n$ elements of size $\{a_1, a_2, \ldots, a_{n}\}$, such that $\sum_{i\in [n]}{a_i} = 2B$, we need to find if there exists a partition of the elements such that each partition sums to $B$. 
We create an instance of $\textsc{Cost-Min}$ with $2$ models, and $n$ sections. 
We choose a random number $z < \min_{i\in [n]}\{a_i\}$. We set the output size for every section to be $z$, and the input size of section $a_i - z$, therefore, the total token size of $d_i$ is $a_i$. 
Let the latency coefficient $\ell_j$ for each model $M_j$ be equal to $ \ell$.
The latency threshold for either model is set to be $L = \ell B$.  The decision question is whether there exists a latency feasible solution for \textsc{Cost-Min} in the given instance. We can see that a \textsc{Yes} instance for \textsc{Partition} implies a \textsc{Yes} instance for \textsc{Cost-Min}, by simply assigning the document sections corresponding to the elements in each partition of total size $B$ to each model. The total latency in each model would therefore be $\ell B = L$. Similarly, a \textsc{Yes} instance for \textsc{Cost-Min} would imply a \textsc{Yes} instance for \textsc{Partition}. We simply take the document sections assigned to each model, and assign the corresponding elements to each partition. The total size of elements in each partition would then be $\frac{L}{\ell} = B$. This completes the proof. 
\end{proof}

\subsection{Proof of Theorem \ref{thm:greedy}}

\begin{proof}
For each instance $d_j$, we first find the set of feasible models $\mathcal{F}_j$. These would be the models that satisfy the quality constraints, that is, $M_i \in \mathcal{F}_j$ if and only if $S_{i,j} \geq Q$. This requires $O(n K)$ computations for all $\mathcal{D}$. 
Then we find the minimum cost model $M' = \arg \min_{M_i \in \mathcal{F}_j}{C_i}$ for each $d_j$ in $O(K)$ and assign $d_j$ to $M'$. The cost incurred would be minimum. In order to see the proof, let us assume by contradiction, that, the optimal solution deviates from the greedy solution for some section $d_j$ and chooses model $M^{opt}_j$ in place of the greedy choice $M_j$. Clearly, $M^{opt}_j$ must be a feasible model for $d_j$, otherwise, the optimal solution would be violating the quality constraint. Since greedy chose the minimum cost model $M_j$, replacing $M^{opt}_j$ cannot increase the cost of the solution. This is true without loss of generality for any $j$ where the optimal solution is different from the greedy. Hence, the optimal solution can be feasibly converted to the greedy solution without increasing the cost, since there are no latency constraints. This completes the proof. 
\end{proof}

\subsection{Proof of Theorem \ref{thm:flow}} 
\begin{proof}
This problem can be modeled as a minimum cost maximum flow problem and as a result admits a polynomial time optimal solution by the Bellman Ford algorithm. 
The construction is as follows. We construct a directed bipartite graph with the sections as nodes in one partition and the models as the nodes in the other partition. Specifically, we construct a graph $\mathcal{G} = \{\mathcal{V}_1, \mathcal{V}_2, \mathcal{E}\}$, where $\mathcal{V}_1 = \mathcal{D} = \{d_1, d_2, \ldots, d_n\}$, and $\mathcal{V}_2 = \mathcal{M} = \{M_1, M_2, \ldots, M_K\}$, and $\mathcal{E}$ is comprised of feasible directed edges between the nodes in the two partitions. The edges are all directed from the document section nodes to the model nodes. 
An edge $e = (d_j, M_i)$ (i.e., directed from $d_j$ to $M_i$) exists only if it is feasible, that is, if the assignment meets the estimated quality constraints: $S_{i,j} \geq Q$. 

A model $M_i$ can accommodate $N_i = \lfloor\frac{L}{L_i}\rfloor$ tokens while satisfying latency constraints. Let us refer this to as $M_i$'s token capacity. 
Let the (input + output)\footnote{The expected output token size is same for all sections by our earlier assumption of $p$ sentence summary. We can simply multiply $p$ by the estimated average number of tokens per sentence as observed through empirical data.} size of every section be $d$ in terms of number of tokens. Let us normalize the model capacities as well as by the section sizes by $d$ without loss of generality. Now, the sections have size $1$ and the normalized model capacity for $M_i$ is $\hat{N}_i = \lfloor\frac{N_i}{d}\rfloor$. Therefore, we can assign 
$\hat{N}_i$ document sections to model $M_i$ without violating latency constraints. 

Now, we set up a flow problem in this graph. We construct a source node $s$ and a sink node $t$. We construct directed edges from $s$ to each document section $d_j$, and set its capacity $1$ and cost as $0$. The edges directed from section nodes to model nodes each have capacity $1$ and cost corresponding to the model cost. Specifically, an edge $e = (d_j, M_i)$ has capacity $1$ and cost $C_{i,j}\cdot d$. 
We further construct directed nodes from each of the model nodes to the sink $t$. For an edge $e' = (M_i, t)$, the cost is $0$ and the capacity is $\hat{N}_i$. 
Now, for $n$ document sections, we try to send a flow of $n$ from $s$ and $t$ and find the minimum cost maximum flow in this graph. If the problem admits a feasible solution, that is, if there exists a solution such that all document sections can be assigned to one model each without violating quality and latency constraints, then, by integrality of flow and the optimality of min-cost max flow algorithm (one can use Bellman Ford algorithm for this purpose), we will find the minimum cost such assignment. The assignment would be: if an edge $e=(d_j, M_i)$ carries a flow of $1$, then document section $d_j$ should be assigned to model $M_i$, otherwise not. On the other hand, if there exists no such feasible solution, then the flow will find the maximum number of feasible assignments at the minimum cost. The complexity is polynomial: $O(|V|^2 |E|)$. 
\end{proof}

\section{Datasets}
\label{sec:datasets}
We use Question Answering and NLI datasets to evaluate and benchmark our token optimization methods. We use multiple-choice question-answering over long-form question answering datasets for a variety of reasons. Firstly, since we use a powerful LLM like GPT 3.5 Turbo to evaluate, we need challenging datasets that involve logical reasoning to arrive at the correct response. To the best of our knowledge, there are no appropriate logical long-form QA datasets; however, several challenging MCQ and NLI datasets suit our purpose. Secondly, metrics for evaluating long-form question-answering tasks are not reliable, given the subjective nature of the task. We have used BERTScore to evaluate summarization on Dataset I. However, BERTScore has its limitations as an evaluation metric for question-answering. Finally, since LLMs are proficient at generating coherent and contextually appropriate responses, the model compensates for the compressed text or dropped words, and the variance in results of long-form QA is minimal across various compression methods. Thus, we cannot capture the actual loss in information and meaning owing to LLM capabilities when evaluating long-form QA datasets. We give details of the datasets used for our Token optimization   experiments in Table \ref{tab:datasets}, as also the details of datasets used for evaluating Smart Router. 

\begin{table*}[t]
\centering
\begin{tabular}{|p{0.12\linewidth} | p{0.15\linewidth} | p{0.58\linewidth}|}\hline
\textbf{Dataset} & \textbf{Task} & \textbf{Description} \\ \hline

CosmosQA \cite{cosmos} & Question Answering & CosmosQA is a large-scale dataset of 35.6K problems that require commonsense-based reading comprehension, formulated as multiple-choice questions. It focuses on reading between the lines over a diverse collection of people’s everyday narratives, asking questions concerning the likely causes or effects of events that require reasoning beyond the exact text spans in the context.
\\ \hline
LogiQA \cite{logiqa} & Question Answering & LogiQA is sourced from expert-written questions for testing human Logical reasoning. It consists of 8,678 QA instances, covering multiple types of deductive reasoning
\\ \hline
ReCLoR \cite{reclor} & Question Answering & ReClor is a dataset extracted from logical reasoning questions of standardized graduate admission examinations. Empirical results show that the state-of-the-art models struggle on ReClor with poor performance.
\\ \hline
ConTRoL \cite{control} & Question Answering & ConTRoL is a dataset for ConTextual Reasoning over Long texts. Consisting of 8,325 expert-designed "context-hypothesis" pairs with gold labels, ConTRoL is a passage-level NLI dataset focusing on complex contextual reasoning types such as logical reasoning.
\\ \hline
Dataset I & Summarization & Dataset I is a summarization dataset constructed from sections from 80+ PDFs from Adobe Inc. PDF corpus. The gold summaries are obtained by using GPT-4. This data set is the most reflective of our use case, i.e., real-world documents. 
\\ \hline
Dataset II & Summarization & Dataset II is a summarization dataset constructed from taking samples from public datasets namely, bigpatent\cite{1906.03741}, samsum\cite{1911.12237}, wiki bio\cite{1603.07771}. The gold summaries are generated using GPT-3.5-Turbo, it contains candidate summaries from vicuna-13b, Text-Davinci-003 and Text-Curie-001.
\\ \hline
\end{tabular}
  \caption{Overview of datasets used to evaluate our Token Optimization Module (TOM)}
  \label{tab:datasets}
\end{table*}

\section{Results on Token Optimization}
\label{sec:tables}
Here we list the results for each dataset on the token optimization experiments. 
Table 7 lists the results on CosmosQA, Table 8 on ConTRol and Table 9 on Dataset I.  
Table 10 some examples of token simplification from our model for qualitative evaluation by the readers. 

\begin{table}[H]
\label{tab:cos}
\centering
\begin{tabular}{|c|c|c|}\hline
\textbf{Compression Method} & \textbf{Accuracy} &\textbf{Compression} \%
\\ \hline
No Compression &0.736 &0.0
\\ \hline
GPTrim &0.601 &42.6
\\ \hline
SCRL &0.718 &19.5
\\ \hline
Our Method &0.711 &22.5 
\\ \hline
\end{tabular}
  \caption{CosmosQA}
  
\end{table}

\begin{table}[H]
\label{tab:con}
\centering
\begin{tabular}{|c|c|c|}\hline
\textbf{Compression Method} & \textbf{Accuracy} &\textbf{Compression} \%
\\ \hline
No Compression &0.521 &0.0
\\ \hline
GPTrim &0.470 & 32.13
\\ \hline
SCRL &0.497 & 20.3
\\ \hline
Our Method &0.493 & 28.6
\\ \hline
\end{tabular}
  \caption{ConTRoL}
  
\end{table}

\begin{table}[H]
\label{tab:daI}
\centering
\begin{tabular}{|c|c|c|}\hline
\textbf{Compression Method} & \textbf{BertScore} &\textbf{Compression} \%
\\ \hline
No Compression &0.738 &0.0
\\ \hline
GPTrim &0.702 & 19.8
\\ \hline
SCRL &0.734 & 16.6
\\ \hline
Our Method & 0.715 & 35.2
\\ \hline
\end{tabular}
  \caption{Dataset I}

\end{table}

\begin{table*}[t]
  \label{tab:qual}
\centering
\begin{tabular}{|p{0.3\linewidth} | p{0.3\linewidth} | p{0.3\linewidth}|}\hline
\textbf{Original Sentence} & \textbf{Simplified Sentence @ 0.8} & \textbf{Simplified Sentence @ 0.6} \\ \hline

Effective altruism advocates using evidence to determine the most effective ways to benefit others. & Effective altruism uses evidence to find the best way to help others. & Effective altruism is about using evidence to help others.
\\ \hline
The joyful choir's harmonious melody resonated through the cathedral, captivating the congregation. & The joyful melody could be heard all through the cathedral. & The joyful melody could be heard all through the cathedral.
\\ \hline
Jeddah is the principal gateway to Mecca, Islam's holiest city, which able-bodied Muslims are required to visit at least once in their lifetime. & Jeddah is the main gateway to Mecca, Islam's holiest city. Muslims must visit Mecca at least once in their lives. & Jeddah is the main city on the road to Mecca, Islam's holiest city.
\\ \hline
\end{tabular}
  \caption{Qualitative Examples}

\end{table*}

Table 11 shows the tradeoff of quality loss with tokens saved optimally with respect to the brute-force method of applying all heuristics in a fixed order. 
The $x\%$ Threshold refers to setting the loss tolerance at $x\%$ of the total loss in quality incurred by the brute force method and optimizing the tokens accordingly. In this case, since it was a sentence by sentence comparison, we measured the quality loss in terms of \textsc{S-Bert} similarity. That is, it is measured as $1 - SB(s_1, s_2)$, where $SB(s_1, s_2)$ refers to the \textsc{S-Bert} cosine similarity between the embeddings of sentences $s_1$ and $s_2$. We did this on Dataset I, and we are reporting the numbers for $2$ such samples as illustrative here. 

\begin{table}[H]
\label{tab:tradeoff}
\centering
\resizebox{0.5\textwidth}{!}{
\begin{tabular}{|c|c|c|c|c|}\hline
\textbf{Method} & \textbf{Loss-1} & \textbf{Tokens Saved-1} & \textbf{Loss-2} & \textbf{Tokens Saved-2} \\\hline
Brute Force & $0.04$ & $7$ & $0.025$ & $14$\\\hline
$90\%$ Threshold &$0.0285$ & $5$ & $0.022$ & $13$ \\\hline
$80\%$ Threshold &$0.0285$ & $5$ &  $0.0148$ & $9$\\\hline
$70\%$ Threshold & $0.008$ & $2$ & $0.0148$ & $9$
\\ \hline
\end{tabular}}
  \caption{Token Optimization Trade-off}

\end{table}

\end{document}